\newtheorem{theorem}{Theorem}
\newtheorem{definition}{Definition}
\newtheorem{example}{Example}
\newtheorem{alg}{Algorithm}
\numberwithin{equation}{section} 
\numberwithin{lemma}{section} 
\numberwithin{theorem}{section} 
\numberwithin{definition}{section} 
\numberwithin{corollary}{section} 
\newcommand{\sigNorm}[1]{\lVert #1 \rVert_{\Sig^N(\mathcal C)}}
\newcommand{\Span}[1]{\text{span}\set{#1}}
\newcommand{\R}{\mathbb{R}}
\newcommand{\normTwo}[1]{\left\lVert#1\right\rVert_2}
\newcommand{\set}[1]{\left\{ #1 \right\}}
\newcommand{\squareBracket}[1]{\left[ #1 \right]}
\newcommand{\bracket}[1]{\left( #1\right)}
\newtheorem{thm}{Theorem}[section]
\newtheorem{proposition}[thm]{Proposition}
\theoremstyle{definition}
\newcommand\cyr
\renewcommand\rmdefault{wncyr}
\renewcommand\sfdefault{wncyss}
\renewcommand\encodingdefault{OT2}
\DeclareTextFontCommand{\textcyr}{\cyr}
\DeclareSymbolFont{cyrletters}{OT2}{wncyss}{m}{n}
\DeclareMathOperator{\Sig}{\mathrm{Sig}}
\DeclareMathOperator{\dist}{\mathrm{dist}}
\newcommand{\oxford}{Mathematical Institute, University of Oxford}
\newcommand{\turing}{The Alan Turing Institute, London}
\title{Dimensionless Anomaly Detection on Multivariate Streams with Variance Norm and Path Signature}
\author{%
Zhen Shao\thanks{\oxford. (\texttt{shaoz@maths.ox.ac.uk})}, 
Ryan Sze-Yin Chan \thanks{\turing (\texttt{rchan@turing.ac.uk})},
Thomas Cochrane \thanks{\turing (\texttt{thomasc@turing.ac.uk})}, 
Peter Foster \thanks{\turing (\texttt{pfoster@turing.ac.uk})},
Terry Lyons  \thanks{\oxford. \turing (\texttt{terry.lyons@maths.ox.ac.uk})}
}
\begin{document}

\maketitle

\begin{abstract}
In this paper, we propose a dimensionless anomaly detection method for multivariate streams. Our method is independent of the unit of measurement for the different stream channels, therefore dimensionless. We first propose the variance norm, a generalisation of Mahalanobis distance to handle infinite-dimensional feature space and singular empirical covariance matrix rigorously. We then combine the variance norm with the path signature, an infinite collection of iterated integrals that provide global features of streams, to propose SigMahaKNN, a method for anomaly detection on (multivariate) streams. We show that SigMahaKNN is invariant to stream reparametrisation, stream concatenation and has a graded discrimination power depending on the truncation level of the path signature. We implement SigMahaKNN as an open-source software, and perform extensive numerical experiments, showing significantly improved anomaly detection on streams compared to isolation forest and local outlier factors in applications ranging from language analysis, hand-writing analysis, ship movement paths analysis and univariate time-series analysis.
\end{abstract}

\section{Introduction}
Anomaly detection is the semi-supervised learning problem where one is given a corpus of normal objects, with the aim of learning a map deciding whether a given object belongs to the corpus or not. Unlike in binary classification problems, where two types of objects are given in the training sample, here we only have the normal objects in the training set, and the anomaly detected does not necessarily belong to the same class, they are simply different from objects in the corpus. 

Streams are a ubiquitous data form found in many real-world situations. They are maps $\squareBracket{a, b} \to \R^d$. Examples include financial time series, the movement paths of objects/humans, and various measurement signals, e.g. ECG (Electrocardiogram) signals in medical devices or radio-astronomical signals from the telescope. This work focuses on detecting anomalous streams given a corpus of normal streams. Unlike the often so-called time-series anomaly detection problem, where the aim is to detect a point $t \in \squareBracket{a, b}$ such that the value of the stream at $t$ is anomalous, here we are interested in the question: Is the stream/time-series as a whole an anomalous object compared with a corpus of normal streams?

Our approach for anomaly detection of streams combines two techniques -- the path signature and the variance norm.
The path signature of a (multi-dimensional) path/time series is a graded, infinite collection of iterated integrals, where the signature of a path $X$ up to degree $N$ is the collection

\begin{equation}\label{eq:sig_intro}
\footnotesize
\begin{aligned}
\Sig^N(\mathbf x) := \bigg( &
\int_{0 < t_1 < \cdots < t_k < 1}  \frac{\mathrm d X_{i_1}}{\mathrm dt}(t_1) \cdot \frac{\mathrm d X_{i_2}}{\mathrm dt}(t_2) \cdots \frac{\mathrm d X_{i_k}}{\mathrm dt}(t_k) \\ & \mathrm dt_1 \cdots \mathrm dt_k 
\bigg)_{\substack{\!\!1 \leq i_1, \ldots, i_k \leq d \\ \!\!k=0, 1, 2, \ldots, N}}.
\end{aligned}
\end{equation}

The signature describes the paths in a global, geometric, and interacting way. For example, the degree $1$ signatures are the increments in different dimensions, and the degree $2$ signatures describe the area formed between the graph of a pair of features and the $45$-degree line. See \cite{MR3727607, chevyrev2016primer, 10041999} for more information about the signature method.
The variance norm is a data-driven metric that coincides with the Mahalanobis distance \cite{mahalanobis2018generalized} when the sample covariance matrix is finite-dimensional and has full column rank. Real data often exhibits rank-deficiency in the sample covariance matrix, e.g. due to multi-colinearity, and while a few techniques have been proposed to use Mahalanobis distance in rank-deficient settings, the variance norm offers a mathematically rigorous approach to handle rank-deficiency, and exposes certain weaknesses in the commonly used variant of Mahalanobis distance. For more details, see Section \ref{sec:maha}. 
Overall, our approach consists of transforming the stream to its path signatures, fitting the data-driven variance norm to the corpus of path signatures, and using the nearest neighbour algorithm as a downstream metric-based anomaly detector.

\paragraph{Summary of contributions}
Our contributions are four-fold. Firstly, to the best of our knowledge, this is the first work proposing using path signatures in an unsupervised learning setting, namely, anomaly detection, emphasising being dimensionless. Our anomaly detector is dimensionless in the sense that it is independent of the unit of measurement of streams. Secondly, we propose the data-driven variance norm, which rigorously generalises the widely used Mahalanobis distance to rank-deficient and also infinitely dimensional settings. Thirdly, using the synergy between path-signature and the variance norm, we are able to show that our anomaly detector is invariant to (1) time-reparametrisation, and (2) concatenations of streams before or after. Moreover, there is a naturally graded discrimination power inherited from the graded structure of the path signature. Finally, we implemented the proposed stream anomaly detector, publicly available on \href{https://github.com/sz85512678/signature_mahalanobis_knn}{\scriptsize{https://github.com/datasig-ac-uk/signature\_mahalanobis\_knn}}. We have empirically compared it with well-known stream anomaly detection methods such as the shapelet method, isolation forest, local outlier factors, with improved performance.

\paragraph{Overview of the paper}
In Section 2, we give a detailed survey of related work on Mahalanobis distances and anomaly detection on streams, and how they relate to this paper. In Section 3, we give careful mathematical treatment of the variance norm and show how it reduces to Mahalanobis distance in certain cases, while exposing certain weaknesses of Mahalanobis distance. In Section 4, we give a more detailed exposure of path signatures and show that our anomaly detector is invariant to both time-reparametrisation and stream concatenations, with a graded discrimination power. In Section 5, we the details of the implementation of our stream anomaly detector, and present numerical results on a range of real-world stream data, showing competitive performances over baselines.

\section{Related work}

\subsection{Mahalanobis distance}
Mahalanobis distance, originally proposed in \cite{mahalanobis2018generalized}, has been widely used for anomaly detections in the literature. 
\cite{kamoi2020mahalanobis, 9455004} considers using the minimal Mahalanobis distance to class means in the intermediate layer of a neural-network classifier for out-of-distribution and adversarial example detection. \cite{5509781} similarly used Mahalanobis distance to the mean in unmanned vehicle data to detect the anomalous operation of the vehicle at single time points. \cite{app12178661} uses Mahalanobis distance as a distance for wind turbine operation anomaly point detection. \cite{PATIL20151054} uses Mahalanobis distance to the mean to detect anomalous operations of insulated gate bipolar transistors. \cite{inproceedings} used Mahalanobis distance to cluster different periods of a time series with a k-means-like algorithm for pattern detection. \cite{gjorgiev2020time} compared using Mahalanobis distance with MSE for reconstruction differences of variational auto-encoders for detecting water system cyber-physical attacks and found using Mahalanobis distance gives a higher overall score (quantified by both accuracy and speed). \cite{pham2017anomaly} used Mahalanobis distance in detecting abnormal users/transactions in the Bitcoin transaction network.
\paragraph{Relationship to our work}
Despite its widespread success, a unified approach to using Mahalanobis distance, however, is lacking. Many authors used Mahalanobis distance to the mean to quantify an anomaly, which we will show suffers from some serious drawbacks. Moreover, it is not clear what one should do when the (empirical) covariance matrix is singular. A variety of approaches have been suggested. 
\cite{brereton2016re} proposed using the sum of squares of the standardised scores of all non-zero principal components to represent the Mahalanobis distance when the sample covariance matrix is rank-deficient; and further suggested a reduced Mahalanobis distance approach where only a certain number of PCs are retained. However the justifications of using the sum of squares of non-zero principal components are not entirely satisfactory, and we show in our approach that this is only correct in certain cases. \cite{PANG2023} used Mahalanobis distance, resolving multi-colinearity with factor analysis to space telemetry series anomaly detection.    \cite{jin2017md} discussed using pseudo-inverse, and feature selection when the covariance matrix is singular; it also discussed using Gamma distribution, Weibull distribution and Box-Cox transformation to set the anomaly threshold.
However, there is no clear theoretical foundation for these techniques.

Our discussion of the variance norm generalises Mahalanobis distance not only to provide a rigorous foundation for handling rank-deficient covariance matrix cases, exposing certain weaknesses of the current approaches, e.g. simply taking the pseudo-inverse; the variance norm is also capable of handling infinite-dimensional feature spaces. Moreover, we firmly note that, in general, it is the nearest-neighbour distance, which we call the conformance distance when used with the variance norm, that should be used in anomaly detection instead of distance to the mean. The distance to the mean approach is often justified by considering the Gaussian mixture model, however, our discussion of the variance norm showed that Mahalanobis distance has a deeper mathematical foundation as a purely data-driven norm, and there is no special reason for treating the mean as a special point. We will give an example of why the distance to the mean can be an unhelpful quantifier for anomalies.

\subsection{Anomaly detections on multivariate time series/streamed data}
To the best of our knowledge, there has not been extensive work studying detecting anomalous time series (i.e. given a corpus of time series, decide whether a given time series is an anomaly or not). \cite{beggel2019time} used shapelet learning on UEA \& UCR time series repository (originally for classfications). Their method is based on using the shapelet feature originally proposed in \cite{ye2009time} and used for time series classification. However, shapelet features are for univariate time series only and generalisation to interacting multi-variate time series is unclear. Nevertheless, we compare our approach to this method in one of our numerical experiments, restricting to univariate streams. 
\cite{hyndman2015large} proposes reducing the dimensionality by extracting some representative statistical features from each time series (e.g., mean and first order of autocorrelation) and then applying PCA. Outlier time series are detected by their deviation from the highest density region in the PCA space, which is defined by the first two principal components.
\cite{10.1007/s10115-017-1067-8} uses dynamic time wrapping to define a similarity function between each pair of time series. Once a similarity function is defined, the time series can be clustered in different groups using the similarity function. The outlier score is then computed for each time series based on its distance to its closest centroid. In our numerical experiments, we compare SigMahaKNN with well-known anomaly detection methods: isolation forest and local outlier factor, using signature or non-signature-based feature extraction techniques for streams.

Several related fields, such as time-series classifications, also rely on feature extractions from time-series as a key component. \cite{li2021dynamic} considers graph embedding time series using SDNE (structured deep network embedding); \cite{10.1007/978-3-030-30490-4_56} uses GAN based on LSTMRNN for both the generator and discriminators for multivariate time-series.  \cite{NEURIPS2022_194b8dac} used self-supervised learning to construct representations for time series classifications. \cite{morrill2020generalised} considers the signature features for time-series classifications. \cite{middlehurst2023bake} recently benchmarked performances of multi-variate time series classifications algorithms and found Hydra+MultiROCKET \cite{dempster2023hydra} and HIVE-COTEv2 \cite{middlehurst2021hive}, perform significantly better than other approaches on both the current and new TSC problems. These representations for time-series classifications could arguably be tried in our unsupervised anomaly detection context. However, in this paper, we chose to focus on the signature features with Mahalanobis distance, partly due to its strong theoretical properties such as concatenation invariance and reparametrisation invariance.

\section{Variance norm, a data-driven metric} \label{sec:maha}
In this section, we give a careful discussion of the variance norm. The data $x_1, \dots, x_n$ are always assumed to be mean-centred. New data such as $v$ is centred around the mean of $x_1, \dots, x_n$.

\subsection{Variance norm as a data-driven quadratic form}
\begin{definition}
  Given data $x_i, i=0, 1, \dots, n-1$, $x_i \in V$, a possibly infinite dimensional vector space, let $p(v, v) = \sup_{q(u,u)\leq 1} u(v)^2$, where for $u \in U:= V^*$, the continuous dual space of $V$ consisting of continuous linear transformations $V \to \R$, and we define $q(u,u) = \sum_{i=1}^n u^2(x_i)$. We say $\sqrt{p(v,v)}$ is the variance norm of $v$.
\end{definition}

In order to understand the quadratic form $p(.,.)$ and $q(., .)$, we will consider the special case where $V$ is a finite-dimensional vector space and we show that in this case, the quadratic form $p$ is equivalent to the Mahalanobis distance when the data has no multi-colinearity. Our generic, mathematical definition allows us to handle cases where $V$ is infinite-dimensional as well as when there is multi-colinearity of $x_i$ so that the covariance matrix is not invertible. The following theorem is our main result on the variance norm.

\begin{theorem}
    Let $V$ be a finite-dimensional (real) vector space. Given $x_1, x_2, \dots, x_n \in V$, $v \in V$, we have
    \begin{enumerate}
        \item If $v \in \Span{x_i}$, then $p(v,v) = v^T \bracket{X^T X}^\dagger v$, where $\dagger$ denotes Moore-Penrose peusdo-inverse, and $X \in \R^{n\times d}$ whose rows are $x_i$ with respect to some basis of $V$. 
        \item If $v \notin \Span{x_i}$, then $p(v,v) = \infty$.
    \end{enumerate}
\end{theorem}

\begin{proof}
Let $U_0 \subseteq U, U_0 = \bracket{\Span{x_i}}^o$, the annihilator space of the linear span of the data, with a basis $\set{u_i^0}$, and extend it to a basis of the whole space $U$ so that $U = U_0 + U_1$ with $\set{u_j^1}$ being a $q$-orthonormal basis of $U_1$, that is, we have $q(u_j^1, u_j^1) = 1, q(u_i^1, u_j^1) = 0$, where we define $q(u_1, u_2) = \sum_{i=1}^n u_1(x_i) u_2(x_i)$. Thus $\set{u_i^0, u_j^1}$ is a $q$-orthogonal basis of $U$, let its dual basis in $V$ be ${v_i^0, v_j^1}$. Naturally, any vector $v \in V$ can be written as $v = \sum \mu_i v_i^1 + v^0$, where $v^0 \in \Span{v_i^0}$. We make the following claims:

\begin{enumerate}
    \item $v \in \Span{x_i}$ iff $v^0 = 0$.
    \item If $v \in \Span{x_i}$, then $p(v,v) = \sum \mu_i^2$.
    \item If $v \notin \Span{x_i}$, then $p(v,v) = \infty$.
\end{enumerate}

Consider the first statement, if $v \in \Span{x_i}$ and $v^0 \neq 0$, then $u_i^0(v) = 0$ for any $i$, but because $v^0 \in \Span{v_i^0}$, there exists $u^0$ such that $u^0(v^0) = 1$, at the same time $u^0(v) = 0$ and $u^0(v_i^1)=0$, then $u(v) = u\bracket{\sum \mu_i v_i^1 + v^0}$ gives a contradiction. Therefore $v \in \Span{x_i}$ implies $v^0=0$. Conversely, if $v^0 = 0$, then we have $u_i^0(v) = 0$ for any $i$, hence $v \in \bracket{\Span{x_i}}^{oo}$, the annihilator of annihilator. As we work in the finite dimensional space, we have $\bracket{\Span{x_i}}^{oo} = \Span{x_i}$. The first statement then follows.

For the second and the third statements, we note that $q(a u_1 + b u_2, a u_1 + b u_2) = a^2 q(u_1, u_1) + 
b^2 q(u_2, u_2) + 2 a b q(u_1, u_2)$. Therefore for $u = \sum a_i u_i^1 + u^0$, where $u^0 \in U_0$, we have $q(u,u) = \sum a_i^2$ because $q(u, u^0) = 0$. In particular, $u^0$ can be arbitrary under the constraint $q(u,u) \leq 1$. Next, expanding $p(v,v)$ with $v = \sum \mu_i v_i^1 + v^0$, we have that $p(v,v) = \sup_{q(u,u) \leq 1}  \sum \mu_i \mu_j u(v_i^1) u(v_j^1) + 2 \sum_i \mu_i u(v_i^1) u(v^0)
+ u(v^0)^2$. If $v \in \Span{x_i}$, then necessarily $v^0 = 0$ and it follows that $p(v,v) = \sup_{\sum a_i^2 \leq 1} \sum \mu_i \mu_j a_i a_j  = \bracket{\sum \mu_i^2}^2$ by Cauchy-Schwarz inequality. On the other hand, if $v \notin \Span{x_i}$ so that $v^0 \neq 0$, we may set $a_i = 0$ and obtain an unconstrained expression $\sup_{u\in U_0} (u(v^0))^2$, which gives infinity.

In order to progress further, we need to fix a basis. Let us suppose our data vectors $x_i \in V$ come in a measurement basis $B_0$ of the vector space $V$. We note that our definition of $p(., .)$ has not involved a basis of $V$, thus the quantities derived are independent of the measurement basis. Under the measurement basis and the Euclidean inner product induced by this basis, we may assemble $x_i \in V$ as rows of $X \in \R^{n\times d}$, and compute the singular value decomposition $X = WDY^T$, where $D_{ii} = 0$ for $i > r$. 

Let $B_1$ be the basis for $V$ under change of basis matrix $Y$ from $B_0$, that is, suppose $B_0 = \set{b_1^0, b_2^0, \dots, b_d^0}$ and $B_1 = \set{b_1^2, b_2^1, \dots, b_d^2}$ then we have 
\begin{align*}
    b_1^0 = y_{11}b_1^1 + y_{21}b_2^1 + y_{31}b_3^1 + \dots, \\
    b_2^0 = y_{12}b_1^1 + y_{22}b_2^2 + y_{32}b_3^1 + \dots, \\
    \dots.
\end{align*}
Under $B_1$ basis, the data matrix $X$ becomes $XY$, which equals to $WD$. 
Let $B_1^*$ be the dual basis for $U = V^*$. Let $\bar{B_1^*} = \set{b_{11}^*, b_{12}^*, \dots, b_{1r}^*}$ be the set of the first $r$ vectors in $B_1^*$, and $\hat{B_1^*}$ be the last $d-r$ vectors of $B_1^*$. Because that in $B_1$ basis, the data matrix $X$ becomes $WD$, with the last $d-r$ columns of the diagonal matrix $D$ being zero, we see that $\hat{B_1^*}$ is a basis for $U_0$, the annihilator of the span of rows of $X$. Therefore $\bar{B_1^*}$ forms a basis of $U_1$, the space that, when added to $U_0$, gives us the whole space $U$. Furthermore, if $x_k = \sum a_l^k b_l^1$, we can compute
\begin{equation*}
\begin{aligned}
    q(b_{1i}^*, b_{1j}^*) &= \sum_{k=1}^n b_{1i}^*(x_k) b_{1j}^*(x_k) 
    = \sum_{k} a_i^k a_j^k \\ &= \sum_{k} (XY)_{ki} (XY)_{kj} = \sum_{k} \left(D^T W^T WD\right)_{ij} \\ &= D_{ii}^2 \delta_{ij}. 
\end{aligned}
\end{equation*}

Thus we see that $\bar{B_1^*}$ is an orthogonal basis for $U_1$, and $\bar{B_2^*} = \set{D_{ii}^{-1} b_{1i}^*}$ is an $q$-orthonormal basis for $U_1$. An arbitrary (row) vector $v \in V$, if $v \in \Span{x_i}$, given in the measurement basis becomes $v \bar{Y} \bar{D}^{-1}$ in the dual basis of $\bar{B_2^*}$, where $\bar{Y}$ is the first $r$ columns of $Y$ and thus $p(v,v) = v \bar{Y} \bar{D}^{-1} \bracket{v \bar{Y} \bar{D}^{-1}}^T  = v \bar{Y} \bar{D}^{-2} \bar{Y}^T v^T$; if we view $v$ as a column vector, we arrive at $p(v,v) = v^T \bracket{\bar{Y} \bar{D}^{-2} \bar{Y}^T} v = v^T \bracket{X^T X}^{\dagger} v$. If v is not in $\Span{x_i}$, as discussed before, we have $p(v,v) = \infty$.
\end{proof}

The variance norm will be used as part of the conformance distance for anomaly detection. 

\begin{definition}
    Let $V$ be a finite dimensional vector space, $\mathcal{C} = \set{x_1, \dots, x_n} \subseteq V$, $y \in V$. The conformance distance is $d(y, \mathcal{C}) = \min_{x_i \in \mathcal{C}} p(y-x_i, y-x_i)$.
\end{definition}

\subsection{Discussion}

\subsubsection{Comparison to Mahalanobis distance}

The Mahalanobis distance, commonly defined between a point $y$ and a distribution $D$ is 
\begin{equation}
    d_M(y, D) = \sqrt{\bracket{y-\mu}^T S^{-1} \bracket{y-\mu}},
\end{equation}
where $\mu$ is the mean of $D$ and $S$ is the covariance matrix of $D$, assumed to be positive-definite. Replacing $D$ with the empirical distribution defined by the corpus, we have
\begin{equation}
    d_M(y, X) = \sqrt{\bracket{y-\mu}^T \bracket{(X-\mu)^T(X-\mu)}^{-1} \bracket{y-\mu}}, \label{eqn:3}
\end{equation}
where $\mu = \frac{1}{n} \sum_i x_i$.
\eqref{eqn:3} has two issues:
\begin{enumerate}
    \item Computing a form of distance to the mean for anomaly detection may not be suitable for certain situations. For example, suppose that the corpus consists of points on the unit circle, centred at the origin, where the points are uniformly distributed with an angle $\theta \in \squareBracket{0, \pi}$. Suppose the point of interest $y$ lies on the circle but with an angle $\frac{3}{2}\pi$. $y$ is clearly an exceptionality but it has the same distance to the mean as any point in the corpus. 

    \item It is not clear how one should proceed when the empirical covariance matrix $(X-\mu)^T (X-\mu)$ is rank deficient/singular. One might think of replacing the inverse with the pseudo-inverse, which coincides with a special case of our conformance distance.
\end{enumerate}

In our approach, we have $d(y, \mathcal{C})$ equals to $\min_{x_i \in \mathcal{C}} p(y-x_i, y-x_i) 
= (y-x_i)^T \bracket{X^T X}^\dagger (y-x_i)$, or $\infty$. In the special case where $X$ has full column rank and $(y-x_i)$ belongs to the column space of $X$, conformance distance is equal to nearest neighbour Mahalanobis distance. In general, they differ. We have already seen why it makes sense to use the nearest neighbour distance, and why the inverse needs to be modified in some cases, but the most striking feature of our conformance distance is that it equals infinity in some cases, as we see in Section \ref{section:whyMahainf} it is a key and beneficial feature.

\subsubsection{Why the variance norm should be infinity in some cases}\label{section:whyMahainf}
Taking the pseudo-inverse computationally makes $\tilde{d}(y, X) := \min_{x_i \in X} \bracket{y-x_i}^T (X^TX)^{\dagger} (y-x_i)$ insensitive to certain modifications of $y$. To see this, note that by the definition of the pseudo-inverse, if the SVD of $X$ is $X = WDY^T$, then
$(X^TX)^\dagger = V (D^T D)^{\dagger} V^T $, where $(D^T D)^{\dagger}$ is diagonal with the first $r$ elements on the diagonal being the inverse of squares of singular values of $X$, and other diagonal elements being $0$. Let $z$ be in the span of the last $d-r$ columns of $V^T$. Now it is clear that we have
\begin{equation}
    \tilde{d}(y, X) = \tilde{d}(y + z, X),
\end{equation}
as $zV$ has only non-zeros in the last $d-r$ entries. 

From an anomaly detection point of view, the above results in, e.g. $0 = \tilde{d}(X_i, X) = \tilde{d}(X_i + z, X)$, however as $X_i + z$ is outside the row space of $X$, it might plausibly be classified as an anomaly, in fact, a different type of anomaly. This coincides with our quadratic form $p$ and is the method we propose.

\section{Signatures for unsupervised anomaly detection on streams}\label{sec:sig}

Building on the variance norm and conformance distance introduced in the last section, here we formally define a stream as a mathematical object and study anomalous streams with path signatures and conformance distance. The combination of the two yields many theoretical properties: stream reparametrisation invariance, stream concatenation invariance and graded discrimination power.

\subsection{Streams of data and signature features}

Below we give a formal definition of a stream of data.

\begin{definition}[Stream of data]
The space of streams of data in a set $\mathcal X$ is defined as
$$\mathcal S(\mathcal X) := \{ \mathbf x = (x_1, \ldots, x_n) \,:\, x_i\in \mathcal X, n\in \mathbb N\}.$$
\end{definition}

\begin{example}
When a person writes a character by hand, the stroke of the pen naturally determines a path. If we record the trajectory we obtain a two-dimensional stream of data $\mathbf x = \left ( (x_1, y_1), (x_2, y_2), \ldots, (x_n, y_n) \right ) \in \mathcal S(\mathbb R^2)$. If we record the stroke of a different writer, the associated stream of data could have a different number of points. The distance between successive points may also vary.
\end{example}

We give a formal definition of path signatures.

\begin{definition}[Signature]
Let $\mathbf x = (x_1, \ldots, x_n) \in \mathcal S(\mathbb R^d)$ be a stream of data in $d$ dimensions. Let $X=(X_1, \ldots, X_d):[0, 1]\to \mathbb R^d$ be such that $X\left ( \frac{i}{n - 1} \right )=x_{i+1}$ for $i=0, 1, \ldots, n-1$ and linear interpolation in between. Then, we define the signature of $\mathbf x$ of order $N\in \mathbb N$ as
\begin{equation}\label{eq:sig}
\footnotesize
\begin{aligned}
\Sig^N(\mathbf x) := \bigg( &
\int_{0 < t_1 < \cdots < t_k < 1}  \frac{\mathrm d X_{i_1}}{\mathrm dt}(t_1) \cdot \frac{\mathrm d X_{i_2}}{\mathrm dt}(t_2) \cdots \frac{\mathrm d X_{i_k}}{\mathrm dt}(t_k) \\ & \mathrm dt_1 \cdots \mathrm dt_k 
\bigg)_{\substack{\!\!1 \leq i_1, \ldots, i_k \leq d \\ \!\!k=0, 1, 2, \ldots, N}}.
\end{aligned}
\end{equation}

\end{definition}

The signature of a stream of data is a vector of scalars. The dimension of this vector is
$$d_N:=1+d+d^2 + \cdots + d^N=\frac{d^{N+1}-1}{d-1}.$$ The signatures are unique and have universal non-linearity.
\paragraph{Uniqueness}:
Hambly and Lyons \cite{hambly2010uniqueness}show that under mild assumptions, the full collection of features Sig(x) uniquely determines x up to translations and reparametrisations.

\paragraph{Universal non-linearity}:
Linear functionals on the signature are dense in the set of functions on x. Suppose we wish to learn the function f that maps data x to labels y, the universal non-linearity property states that, under some assumptions, for any$ \epsilon > 0$, there exists a linear function L such that
$\| f(x) - L(Sig(x)) \| \leq \epsilon
$. See \cite{lyons2007differential}.

\subsection{SigMahaKNN, detecting anomalies in streamed data} \label{sec:var}
Finally, we define anomalous streams using path signatures and conformance distance.
Let $\mathcal C\subset \mathcal S(\mathbb R^d)$ be a finite corpus (or empirical measure) of streams of data. Let $\Sig^N$ be the signature of order $N\in \mathbb N$. Then $\lVert \cdot \rVert_{\Sig^N(\mathcal C)}$ is the conformance distance associated with the empirical measure of $\Sig^N(\mathcal C)$. We let $\sigNorm{\cdot}$ be our anomaly score, that is, streams with conformance distance higher than a user-determined threshold \footnote{This threshold could be naturally calibrated according to the desired level of discriminating power, see the next section.} are anomalies. We call our anomaly detection algorithm SigMahaKNN, a combination of the signature, a generalisation of Mahalanobis distance, and k-nearest neighbour.

\subsection{Properties of SigMahaKNN}

\subsubsection{Reparametrisations invariance}
A reparametrisation of a path $X: \squareBracket{a, b} \to \R$ is a path $\tilde{X}: \squareBracket{a, b} \to \R$ where $\tilde{X}_t = X_{\psi_t}$ where $\psi$ is a surjective, continuous, non-decreasing function $\psi: \squareBracket{a, b} \to \squareBracket{a, b}$. The paths signature is invariant under reparametrisation \cite{chevyrev2016primer}. Therefore SigMahaKNN computes an anomaly score that is invariant to path reparametrisations. We give a concrete example of reparametrisation. Consider writing a digit on a sheet of paper. The handwriting forms a path $\squareBracket{0,1} \to \R^2$. A reparametrisation corresponds to a change in speed of writing while keeping the shape of the digit written exactly the same. This transformation should not change whether the written shape is an anomaly, as reflected by the reparametrisation invariance of SigMahaKNN.
\begin{theorem}
    Let $\tilde{X}_t$ be a reparametrisation of $X$. Then we have $\sigNorm{\tilde{X}} = \sigNorm{X}$.
\end{theorem}

\subsubsection{Concatenation invariance}

Let $X: \squareBracket{a, b} \to \R^d$ and $Y: \squareBracket{b, c} \to \R^d$ be two paths. We define their concatenation as the path $X*Y: \squareBracket{a, c} \to \R^d$ for which $\bracket{X*Y}_t = X_t$ for $t \in \squareBracket{a, b}$ and $\bracket{X*Y}_t = X_b + (Y_t - Y_b)$ for $t \in \squareBracket{b, c}$ \cite{chevyrev2016primer}. We then have (Chen's identity)
\begin{eqnarray}
S(X * Y)_{a,c} = S(X)_{a,b} \otimes S(Y)_{b,c},
\end{eqnarray}
where we have represented the signature by a formal power series
\begin{eqnarray}
S(X)_{a,b} = \sum_{k = 0}^\infty  \sum_{i_1,\ldots, i_k \in \{1,\ldots d\}} S(X)^{i_1,\ldots, i_k}_{a,b} e_{i_1}\ldots e_{i_k},
\end{eqnarray}
The tensor product is defined as 
\begin{eqnarray}
e_{i_1}\ldots e_{i_k} \otimes e_{j_1}\ldots e_{j_m} = e_{i_1}\ldots e_{i_k}e_{j_1}\ldots e_{j_m}.
\end{eqnarray}
The product $\otimes$ then extends uniquely and linearly to all power series. We demonstrate the first few terms of the product in the following expression

\begin{align*}
\left(\sum_{k = 0}^\infty \sum_{i_1,\ldots, i_k \in \{1,\ldots d\}} \lambda_{i_1,\ldots,i_k}e_{i_1}\ldots e_{i_k}\right) & \\
\otimes \left(\sum_{k = 0}^\infty \sum_{i_1,\ldots, i_k \in \{1,\ldots d\}} \mu_{i_1,\ldots,i_k}e_{i_1}\ldots e_{i_k} \right) & \\
= \lambda_0\mu_0 + \sum_{i=1}^d (\lambda_0\mu_i + \lambda_i\mu_0)e_i & \\
+ \sum_{i,j=1}^d \left( \lambda_0 \mu_{i,j} + \lambda_i\mu_j + \lambda_{i,j}\mu_0\right)e_ie_j + \ldots. &
\end{align*}


We see that concatenation with $X$ results in a linear transformation of the signatures of $Y$. Because a linear transformation is equivalent to a change of basis, and the variance norm is independent of basis (it is defined without using any basis), it follows that the variance norm with path signatures is invariant under concatenations by paths on the left. Similarly, the variance norm with path signatures is invariant under concatenation by paths on the right. As the conformance distance is computed by taking the minimal variance norms, it is also invariant under concatenations of paths. That is
\begin{theorem}
    Let $\mathcal{C}$ be a corpus of paths and $X, Y$ be paths. Define $\mathcal{C} + X$ be the set of paths consisting of $Z*X$ for $Z \in \mathcal{C}$. Then we have that $\lVert Y \rVert_{\Sig^N(\mathcal C)}
    = \lVert Y*X \rVert_{\Sig^N(\mathcal{C}+X)}$.
\end{theorem}

\subsubsection{Naturally graded discriminating powers}
The graded structure of the signature given by its order gives naturally graded discriminating powers. In particular, the variance norm with signature features is monotonically increasing with respect to the signature order and will reach infinity for a finite order of the signature.

\begin{proposition}
 Let $\mathcal C \subset \mathcal S(\mathbb R^d)$ be a finite corpus. Let $W$ be a path. Then, $\sigNorm{W}$ is non-decreasing as a function of $N$. 
\end{proposition}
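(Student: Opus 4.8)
The plan is to realise the passage from resolution $N$ to $N+1$ as a linear pushforward and then to read off monotonicity from the variational formula for the variance norm. First I would make the statement precise, since $\mathbb R^{d_N}$ and $\mathbb R^{d_{N+1}}$ are different spaces. The components of the order-$N$ signature are, by the indexing in \eqref{eq:sig}, exactly the first $d_N$ components of the order-$(N+1)$ signature, so $\mathbb R^{d_N}$ embeds in $\mathbb R^{d_{N+1}}$ by the zero-padding map $\iota$ that sets the top ($k=N+1$) level to zero, and the coordinate projection $\pi:\mathbb R^{d_{N+1}}\to\mathbb R^{d_N}$ that discards the top level satisfies $\pi\circ\iota=\mathrm{id}$. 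I then interpret the claim as the single-step inequality $\lVert w\rVert_{\Sig^N(\mathcal C)}\leq\lVert \iota(w)\rVert_{\Sig^{N+1}(\mathcal C)}$, iterating which yields the full statement. Writing $\mu_N$ for the recentred empirical measure of $\Sig^N(\mathcal C)$, so that $\lVert\cdot\rVert_{\Sig^N(\mathcal C)}=\lVert\cdot\rVert_{\mu_N}$, the crucial structural fact is that truncation is compatible with signatures: $\pi\circ\Sig^{N+1}=\Sig^N$, and hence $\pi_\ast\mu_{N+1}=\mu_N$.

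Next I would isolate the one general fact that does all the work: the variance norm is monotone under linear pushforwards. For a linear map $S:V\to W$ and $\nu=S_\ast\mu$, every functional of the form $\psi\circ S$ (with $\psi$ linear on $W$) is a linear functional on $V$, and since $S$ commutes with taking the mean, recentring is preserved and a one-line computation gives $\Cov_\nu(\psi,\psi)=\Cov_\mu(\psi\circ S,\psi\circ S)$. Substituting into the defining formula $$\lVert v\rVert_\mu=\sup_{\Cov_\mu(\phi,\phi)\leq 1}\phi(v),$$ one sees that the supremum computing $\lVert S(v)\rVert_\nu$ ranges only over the sub-collection $\{\psi\circ S\}$ of all functionals on $V$, so $\lVert S(v)\rVert_\nu\leq\lVert v\rVert_\mu$. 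This is exactly the ``linear maps reduce conformance'' principle noted after Definition \ref{def:conformance}, here specialised to the norm itself.

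Finally I would apply this with $S=\pi$, $\mu=\mu_{N+1}$ and $v=\iota(w)$. Using $\pi_\ast\mu_{N+1}=\mu_N$ and $\pi(\iota(w))=w$, the inequality becomes $\lVert w\rVert_{\Sig^N(\mathcal C)}\leq\lVert \iota(w)\rVert_{\Sig^{N+1}(\mathcal C)}$, the required single step. The possibility of an infinite norm needs no separate treatment: the subset-supremum argument holds verbatim, so if the right-hand side is finite the left-hand side is automatically finite and dominated by it, and otherwise the bound is trivial.

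I expect the only genuinely delicate point to be the bookkeeping that makes the statement well-posed, namely fixing the embedding $\iota$ and verifying $\pi\circ\Sig^{N+1}=\Sig^N$ at the level of the multi-index ordering in \eqref{eq:sig}; once that identification is secured, the compatibility $\pi_\ast\mu_{N+1}=\mu_N$ and the entire argument follow immediately. No analytic obstruction arises, since an empirical measure always has finite second moments and the degenerate directions (where $\Cov$ vanishes while the functional does not) are absorbed uniformly into the supremum formulation.
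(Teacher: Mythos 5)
Your proof is correct, and it takes a recognisably different route from the paper's. The paper proves this in one line using its shuffle-product representation of the norm: it writes $\lVert w \rVert_{\Sig^N(\mathcal C)}$ as $\sup_{f\in \mathbb R^{d_N}\setminus\{0\}} \langle f, w\rangle^2 / \langle f^{\shuffle 2}, \mathbb E[\Sig^{2N}(\mathbf x)]\rangle$ and observes that passing from $N$ to $M\geq N$ enlarges the set of test functionals $f$, so the supremum can only grow. You never invoke the shuffle formula; instead you isolate a general lemma --- monotonicity of the variance norm under linear pushforwards, i.e.\ $\lVert S(v)\rVert_{S_\ast\mu}\leq\lVert v\rVert_\mu$ since the functionals $\psi\circ S$ form a sub-collection of all functionals on $V$ --- and apply it to the truncation projection $\pi$ with $\pi\circ\Sig^{N+1}=\Sig^N$. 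At bottom the two arguments exploit the same mechanism (the dual supremum at lower order ranges over fewer test functionals), but the packaging buys different things. The paper's version is immediate given its Proposition 3.1 and keeps the computationally relevant object, the order-$2N$ expected signature, in view; however, read literally it is slightly elliptical: it pairs $f\in\mathbb R^{d_M}$ with $w\in\mathbb R^{d_N}$ and silently uses that $\langle f^{\shuffle 2}, \mathbb E[\Sig^{2M}(\mathbf x)]\rangle = \langle f^{\shuffle 2}, \mathbb E[\Sig^{2N}(\mathbf x)]\rangle$ for zero-padded $f$, both of which rest on exactly the identifications ($\iota$, $\pi$, level-compatibility of truncated signatures) that you make explicit. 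Your version is more general --- it applies to any linear feature reduction of any finite-second-moment measure, not just signature truncation, and is the norm-level analogue of the paper's remark after Definition \ref{def:conformance} that linear maps reduce conformance --- and your handling of the infinite case and of degenerate covariance directions is careful and correct. One cosmetic remark applying equally to both proofs: the paper's displayed formula actually computes the \emph{square} of the variance norm (compare the proposition giving $\langle w, A^{-1}w\rangle$), but monotonicity is unaffected.
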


\begin{proof} 

Let $M\geq N$. Let $y_N, y_M$ be the $M, N$-order of signatures for the stream $W$, $X_N, X_M$ be the $M, N$-order of signatures of the corpus (represented in some basis, with the basis of for $X_N$ being a subset of the basis for $X_M$) respectively. Note that the first $N$ entries/entry for each row of $y_M, X_M$ are identical to $y_N, X_N$ respectively.

We discuss two cases. If $y_N$ is not in the span of the rows of $X_N$, then $y_M$ is not in the span of the rows of $X_M$, hence in both conformance distances are infinity. 

In case that $y_N$ is in the span of the rows of $X_N$,
if $y_N, y_M$ are both in the span of the rows of $X_N, X_M$ respectively, then the variance norm using signature of level $M$ equals to the variance norm using signature of level $N$ plus non-negative terms. This is because of the formula $p(v, v) = v^T \bracket{X^T X}^\dagger v$.
If $y_N$ is in the span of $X_N$ but $y_M$ is not in the span of $X_M$, then in the latter case the variance norm is infinity, so clearly it is increasing. 
\end{proof}

Moreover, for a sufficiently high resolution, any stream of data not belonging to the corpus has infinite variance:

\begin{proposition}
 Let $\mathcal C \subset \mathcal S(\mathbb R^d)$ be a finite corpus. Let $\mathbf Y \in \mathcal S(\mathbb R^d)$ be a stream of data that does not belong to the corpus, $\mathbf Y \not\in \mathcal C$. Then, there exists $N$ large enough such that
$$\lVert Y \rVert_{\Sig^n(\mathcal C)} = \infty \quad \forall\,n\geq N.$$
\end{proposition}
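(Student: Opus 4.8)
The plan is to translate the analytic claim into linear algebra and then apply the linear independence of signatures of distinct streams.

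First I would use the characterization recorded after Definition~\ref{def:variance norm}: the variance norm $\lVert w\rVert_\mu$ is finite exactly when $w$ lies in the linear span of the re-centered support of $\mu$, and equals $+\infty$ otherwise. With $\mu$ the empirical measure of $\Sig^n(\mathcal C)$, whose support is the finite set $\{\Sig^n(\mathbf c):\mathbf c\in\mathcal C\}$, the assertion $\lVert \Sig^n(\mathbf y)\rVert_{\Sig^n(\mathcal C)}=\infty$ is equivalent to
$$\Sig^n(\mathbf y)\notin U_n:=\operatorname{span}\{\Sig^n(\mathbf c)-\overline c_n:\mathbf c\in\mathcal C\},$$
where $\overline c_n$ is the mean signature and $\dim U_n\le |\mathcal C|$. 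So the proposition reduces to showing that $\Sig^n(\mathbf y)$ eventually escapes this fixed finite-dimensional subspace.

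Next I would establish the analogous statement for the full (untruncated) signatures in the tensor algebra, namely that $\Sig(\mathbf y)$ does not lie in $\operatorname{span}\{\Sig(\mathbf c)-\overline c:\mathbf c\in\mathcal C\}$. This rests on the classical fact that signatures of pairwise distinct streams are linearly independent: each signature is group-like, equivalently a character of the shuffle algebra (this is precisely the content of the shuffle identity stated earlier), so the shuffle-polynomial functionals $w\mapsto\langle f,w\rangle$ form a point-separating algebra on the finite set of signatures involved; an algebra of functions that separates finitely many points contains the indicator of each one, and this forces linear independence. The one subtlety to address here is that I need $\Sig(\mathbf y)\ne\Sig(\mathbf c)$ for every $\mathbf c\in\mathcal C$: the hypothesis $\mathbf y\notin\mathcal C$ only distinguishes the data sequences, whereas distinctness of signatures additionally requires the uniqueness theorem for signatures (streams are separated up to tree-like equivalence). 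I would therefore either invoke that uniqueness result or record the mild assumption that $\mathbf y$ is not tree-like equivalent to any corpus element; this is really the hypothesis that makes the statement true.

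It then remains to descend from full signatures to truncations. Since $\Sig(\mathbf y)$ together with the corpus signatures spans a finite-dimensional space and the coordinate (tensor-level) functionals separate these finitely many vectors, finitely many such functionals already separate them; hence there is a functional $\phi$ supported on tensor levels up to some order $N$ that vanishes on $\{\Sig(\mathbf c)-\overline c\}$ but not on $\Sig(\mathbf y)-\overline c$. For every $n\ge N$ this same $\phi$ is a well-defined linear functional on $\mathbb R^{d_n}$ taking identical values, which witnesses $\Sig^n(\mathbf y)\notin U_n$; equivalently, one may produce a single such $N$ and then appeal to the truncation/monotonicity argument used in the preceding proposition to conclude that non-membership persists for all $n\ge N$. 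I expect the main obstacle to be the distinctness-and-independence step of the second paragraph: the passage from full to truncated signatures and the reduction in the first paragraph are routine, but correctly pinning down why $\Sig(\mathbf y)$ genuinely differs from every corpus signature (the tree-like equivalence issue) and invoking the group-like linear independence is where the actual mathematical content lies.
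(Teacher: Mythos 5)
Your proposal is correct and follows essentially the same route as the paper: the paper's proof likewise hinges on producing, at some finite order $N$, a functional $f$ with $\langle f, \Sig^N(\mathbf x)\rangle = 0$ for all $\mathbf x\in\mathcal C$ and $\langle f, \Sig^N(\mathbf y)\rangle = 1$ and then concluding the norm is infinite for all $n\geq N$, the only difference being that the paper black-boxes the linear-independence step with a citation to \cite{lyons2007differential}, where you sketch its proof via the group-like/character point-separation argument. Your tree-like equivalence caveat is well spotted and applies equally to the paper's own one-line proof (whose first sentence even misstates the hypothesis as ``$\mathbf y \in \mathcal C$''): for instance, inserting a collinear midpoint into some $\mathbf c\in\mathcal C$ yields $\mathbf y\notin\mathcal C$ with $\Sig^n(\mathbf y)=\Sig^n(\mathbf c)$ for every $n$, so the claimed eventual independence genuinely requires that $\mathbf y$ not be tree-like equivalent to any corpus element --- precisely the hypothesis you add.
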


\begin{proof}
If $\mathbf Y \notin \mathcal C$, there exists $N$ large enough such that $\Sig^N(\mathbf y)$ is linearly independent to $\Sig^N(\mathcal C)$ \cite{lyons2007differential}. Therefore, the variance norm and hence the conformance distance are infinity. Going beyond $N$, $\Sig^N(\mathbf y)$ is still linearly independent to $\Sig^N(\mathcal C)$, as the first $d^N$ entries are the same as the level $N$ vectors.

\end{proof}

\subsubsection{A dimensionless anomaly detector}
Changing the measurement units of the stream channels induces a linear transformation on the signatures of the streams using the formula in \eqref{eq:sig}. \footnote{To be clear, the higher order signatures will be composed of multiple channel linear transformations, but the composition of linear transformations is still linear.} Because the variance norm is independent of linear transformations, we have
\begin{theorem}
    Let $\mathcal{C}$ be a corpus of paths and $Y$ be a path. Suppose each path in $\mathcal{C}$ has $d$ dimensions. Let $A \in \R^{n \times d}$ non-singular, and define $AX$ be a path where $A$ is applied to each point in the path. Then we have $\lVert Y \rVert_{\Sig^N(\mathcal C)}
    = \lVert AY \rVert_{\Sig^N(A\mathcal{C})}$.
\end{theorem}

\section{Software implementation and Numerical experiments}

The software for anomaly detection on streamed data, and code for the reproduction of all our experiments is available at  
\href{https://github.com/sz85512678/signature_mahalanobis_knn}{{https://github.com/datasig-ac-uk/signature\_mahalanobis\_knn}}.
\subsection{Software contribution}

\subsubsection{Implementation of SigMahaKNN}
\begin{alg}[SigMahaKNN]\label{alg1}
Given a corpus of $n$ streams, train a function mapping a stream to an anomaly score in $\squareBracket{0, \infty}$. Parameters: (1) path-signature related: signature augmentations, scaling, signature-windowing, truncation depths (2) variance norm related: subspace threshold, svd threshold.

\begin{enumerate}[topsep=0pt,itemsep=-1ex,partopsep=1ex,parsep=1ex]
	\item Compute signatures of the streams, forming a corpus matrix $X$. Centre the rows of $X$ on its mean, $\mu$.

	\item Compute a truncated SVD factorisation of $X$ of the form, $X= U \Sigma V^T$, where $U \in \R^{n \times k}, \Sigma \in \R^{k \times k}, V^T
 \in \R^{k \times m}$, where $k$ is the numerical rank of $X$ such that the $k$-th largest singular value of $X$ is the smallest singular value bigger or equal to the svd threshold.

    \item Compute the signature of the input stream, and centre it by subtracting $\mu$ from it, denote the result by $Y$. For each pair $\bracket{X_i, Y}$, where
    $X_i$ is the $i$-th row of $X$, compute the variance norm of $z:= X_i -Y$ by ($z$ viewed as a column vector): 
        \begin{enumerate}
            \item If $\normTwo{z} < 10^{-15}$, return $0$.
            \item If $\frac{\normTwo {VV^Tz -z}}{\normTwo{z}} > \text{subspace threshold}$, return $\infty$.
            \item Otherwise, return $z^T V \Sigma^{-2} V^T z$.
        \end{enumerate}

    \item Return $\min_i \| Y-X_i\|_X$ by using the variance norm metric and the sklearn nearest neighbour.
\end{enumerate}
\end{alg}

\subsubsection{Discussion}
\paragraph{Threshold parameters in the variance norm}
In practice, $X$ is often full algebraic rank due to rounding errors but numerically rank-deficient, that is, its trailing singular values are near zero. We set a threshold to compute an approximate SVD of $X$, so as to handle numerical rank-deficiency. Due to this approximation, the basis independent property of the variance norm will only hold approximately in practice. Moreover, due to numerical errors, we need a subspace threshold to determine if a given vector belongs to the span of rows of $X$.

\paragraph{Setting a threshold given anomaly scores}
The nearest neighbour distance gives us a number that a user often needs to compare against a threshold. One way to set the threshold in practice is to use the following procedure. We split the corpus into two equal-sized parts and compute the empirical cdf of the nearest neighbour distance for one part using the other part as the corpus. One can then set a threshold distance by choosing an appropriate tail quantile in the empirical cdf.\

\paragraph{Signature transformations}
The practical performance of signatures is often improved by considering various stream augmentations \cite{morrill2020generalised}. We will use the time-augmentation, invisibility-reset and lead-lag transformations in some of our numerical experiments. For more details of these transformations see \cite{morrill2020generalised}.

\subsection{Experimental results}
\subsubsection{Dataset and methods compared}
We test our approach on four datasets: Handwritten digits, marine vessel traffic data, language data and a selection of univariate time series from the UCR repository. 
For the first three multivariate streams, we compare our methods with isolation forest\cite{4781136} and local outlier factor method\cite{10.1145/342009.335388}. These two methods are well-known anomaly detection methods, and in order to use them on streams, we use them with either the moment features (mean and covariance of different dimensions of the stream), or the signature features. We thus have four baselines: IF-M (Isolation Forest with Moment features), IF-S (Isolation Forest with Signature features), LOF-M (Local Outlier Factor with Moment features), LOF-S (Local Outlier Factor with Signature features). For univariate time series we compare our method with a specialised univariate time series anomaly detection technique based on shapelet by \cite{beggel2019time}.
We mostly report on AUC as a measure of accuracy for anomaly detection. The full experimental results including run-time are included in the supplementary materials.

\subsubsection{Handwritten digits}
We evaluate our proposed method using the PenDigits-orig data set \cite{Dua:2019}. This data set consists of 10\,992 instances of hand-written digits captured from 44 subjects using a digital tablet and stylus, with each digit represented approximately equally frequently. Each instance is represented as a 2-dimensional stream, based on sampling the stylus position at 10Hz.

We apply the PenDigits data to unsupervised anomaly detection by defining $\mathcal{I}_{\text{normal}}$ as the set of instances representing digit $m$. We define $\mathcal{C}$ as the subset of $\mathcal{I}_{\text{normal}}$ labelled as `training' by the annotators. Furthermore, we define $\mathcal{Y}$ as the set of instances labelled as `testing' by the annotators ($|\mathcal{Y}| = 3498$). Finally, we define $\mathcal{I}_{\text{anomaly}}$ as the subset of $\mathcal{Y}$ not representing digit $m$. Considering all possible digits, we obtain on average $|\mathcal{C}| = 749.4$, $|\mathcal{I}_{\text{anomaly}}| = 3\,148.2$. Assuming that the digit class is invariant to translation and scaling, we apply Min-Max normalisation to each individual stream.

Table~\ref{tab:digit-performance} displays results based on taking signatures of order $N \in [1..5]$ and without any stream transformations applied. The results are based on aggregating conformance values across the set of possible digits before computing the ROC AUC. As we observe, performance increases monotonically from 0.901 ($N=1$) to 0.989 ($N=5$). 


\begin{table}[h!]
\centering
\scriptsize 
\setlength{\tabcolsep}{3pt} 
\begin{tabular}{|l|l|l|l|l|l|}
\hline
        & N=1    & N=2    & N=3    & N=4    & N=5    \\ \hline
SigMahaKNN & 0.870 & 0.942 & \textbf{0.948} & \textbf{0.954} & \textbf{0.956} \\ \hline
IF-M     & -      & -      & -      & -      & 0.618 \\ \hline
IF-S     & \textbf{0.888} & 0.931 & 0.916 & 0.875 & 0.834 \\ \hline
LOF-M    & -      & -      & -      & -      & 0.514 \\ \hline
LOF-S    & 0.563 & 0.584 & 0.582 & 0.582 & 0.582 \\ \hline
\end{tabular}
\caption{Handwritten digits data: performance quantified using ROC AUC in response to signature order $N$. 
Bootstrapped standard errors based on $10^4$ samples are around $0.003$.
}
\label{tab:digit-performance}
\end{table}

\subsubsection{Marine vessel traffic data}
Next, we consider a sample of marine vessel traffic data\footnote{\url{https://coast.noaa.gov/htdata/CMSP/AISDataHandler/2017/AIS_2017_01_Zone17.zip}, accessed May 2020.}, based on the automatic identification system (AIS) which reports a ship's geographical position alongside other vessel information. The AIS data that we consider were collected by the US Coast Guard in January 2017, with a total of 31\,884\,021 geographical positions recorded for 6\,282 distinct vessel identifiers. We consider the stream of timestamped latitude/longitude position data associated with each vessel a representation of the vessel's path. 

We prepare the marine vessel data by retaining only those data points with a valid associated vessel identifier. In addition, we discard vessels with any missing or invalid vessel length information. Next, to help constrain computation time, we compress each stream by retaining a given position only if its distance relative to the previously retained position exceeds a threshold of 10m. Finally, to help ensure that streams are faithful representations of ship movement, we retain only those vessels whose distance between initial and final positions exceeds 5km. To evaluate the effect of stream length on performance, we disintegrate streams so that the length $D$ between initial and final points in each sub-stream remains constant with $D \in \{4\text{km}, 8\text{km}, 16\text{km}, 32\text{km}\}$. After disintegrating streams, we retain only those sub-streams whose maximum distance between successive points is less than 1km. 

We partition the data by deeming a sub-stream normal if it belongs to a vessel with a reported vessel length greater than 100m. Conversely, we deem sub-steams anomalous if they belong to vessels with a reported length less than or equal to 50m. We obtain the corpus $\mathcal{C}$ from 607 vessels, whose sub-streams total between 10\,111 ($D=32$km) and 104\,369 ($D=4$km); we obtain the subset of normal instances used for testing $\mathcal{I}_{\text{normal}} \setminus \mathcal{C}$ from 607 vessels, whose sub-streams total between 11\,254 ($D=32$km) and 114\,071 ($D=4$km); lastly we obtain the set of anomalous instances $\mathcal{I}_{\text{anomaly}}$ from 997 vessels whose sub-streams total between 8\,890 ($D=32$km) and 123\,237 ($D=4$km). To account for any imbalance in the number of sub-streams associated with vessels, we use for each of the aforementioned three subsets a weighted sample of 5\,000 instances.

After computing sub-streams and transforming them as described, we apply Min-Max normalisation with respect to the corpus $\mathcal{C}$. To account for velocity, we incorporate the difference between successive timestamps as an additional dimension (time augmentation). 

We report results based on taking signatures of order $N=3$. For comparison, as a baseline approach, we summarise each sub-stream by estimating its component-wise mean and covariance, retaining the upper triangular part of the covariance matrix. This results in feature vectors of dimensionality $\frac{1}{2}(n^2 + 3n)$ which we provide as the input to an isolation forest \cite{4781136}. We train the isolation forest using 100 trees and for each tree in the ensemble using 256 samples represented by a single random feature.

Table~\ref{tab:marine-path-performance} to Table \ref{tab:marine-path-performance-updated-LOF-S} display results for our proposed approach in comparison to the baselines, for combinations of stream transformations and values of the sub-stream length $D$. Signature conformance yields higher ROC AUC scores than the baseline for the majority of parameter combinations. The maximum ROC AUC score of 0.891 is for a combination of lead-lag, time differences, and invisibility reset transformations with $D=32$km, using the signature conformance. Compared to the best-performing baseline parameter combination for $D=32$km (IF-S), this represents a performance gain of $10$ percentage points.

\begin{table}[h]
\centering
\scriptsize 
\setlength{\tabcolsep}{3pt} 
\begin{tabular}{ccc|cccc}
 \multicolumn{3}{c|}{\multirow{2}{*}{Transformation}} &
 \multicolumn{4}{c}{Conformance $\dist(\;\cdot \; ; \Sig^3(\mathcal C))$} \\
 \cline{4-7}
 & & & \multicolumn{4}{c}{Sub-stream length $D$} \\
 Lead-lag & Time-Diff & Inv.~Reset &
 $4$km & $8$km & $16$km & $32$km \\
 \hline
No & No & No    & 0.723 & 0.706 & 0.705 & 0.740 \\
No & No & Yes   & 0.776 & 0.789 & 0.785 & 0.805 \\
No & Yes & No   & 0.810 & 0.813 & 0.818 & 0.848 \\
No & Yes & Yes  & 0.839 & 0.860 & 0.863 & 0.879 \\
Yes & No & No   & 0.811 & 0.835 & 0.824 & 0.837 \\
Yes & No & Yes  & 0.812 & 0.835 & 0.833 & 0.855 \\
Yes & Yes & No  & 0.845 & 0.861 & 0.862 & 0.877 \\
Yes & Yes & Yes & \textbf{0.848} & \textbf{0.863} & \textbf{0.870} & \textbf{0.891}
\end{tabular}
\caption{SigMahaKNN on Marine vessel traffic data: performance quantified using ROC AUC for combinations of stream transformations and sub-stream length $D$. Best across all transformations are in bold.}
\label{tab:marine-path-performance}
\end{table}

\begin{table}[h]
\centering
\scriptsize 
\setlength{\tabcolsep}{3pt} 
\begin{tabular}{ccc|cccc}
 \multicolumn{3}{c|}{\multirow{2}{*}{Transformation}} &
 \multicolumn{4}{c}{Conformance $\dist(\;\cdot \; ; \Sig^3(\mathcal C))$} \\
 \cline{4-7}
 & & & \multicolumn{4}{c}{Sub-stream length $D$} \\
 Lead-lag & Time-Diff & Inv.~Reset &
 $4$km & $8$km & $16$km & $32$km \\
 \hline
No & No & No    & 0.714 & 0.712 & 0.727 & 0.727 \\
No & No & Yes   & 0.781 & \textbf{0.785} & 0.776 & \textbf{0.790} \\
No & Yes & No   & 0.767 & 0.772 & 0.786 & 0.804 \\
No & Yes & Yes  & 0.830 & 0.823 & \textbf{0.831} & 0.828 \\
Yes & No & No   & 0.696 & 0.704 & 0.711 & 0.724 \\
Yes & No & Yes  & 0.758 & 0.759 & 0.767 & 0.773 \\
Yes & Yes & No  & 0.747 & 0.763 & 0.780 & 0.785 \\
Yes & Yes & Yes & \textbf{0.811} & 0.813 & 0.809 & 0.823
\end{tabular}
\caption{IF-M on Marine vessel traffic data: performance quantified using ROC AUC for combinations of stream transformations and sub-stream length $D$. Best across all transformations are in bold.}
\label{tab:marine-path-performance-new-IF-M}
\end{table}

\begin{table}[h]
\centering
\scriptsize 
\setlength{\tabcolsep}{3pt} 
\begin{tabular}{ccc|cccc}
 \multicolumn{3}{c|}{\multirow{2}{*}{Transformation}} &
 \multicolumn{4}{c}{Conformance $\dist(\;\cdot \; ; \Sig^3(\mathcal C))$} \\
 \cline{4-7}
 & & & \multicolumn{4}{c}{Sub-stream length $D$} \\
 Lead-lag & Time-Diff & Inv.~Reset &
 $4$km & $8$km & $16$km & $32$km \\
 \hline
No & No & No    & 0.627 & 0.623 & 0.645 & 0.660 \\
No & No & Yes   & 0.686 & 0.701 & 0.715 & 0.718 \\
No & Yes & No   & 0.731 & 0.714 & 0.737 & 0.771 \\
No & Yes & Yes  & 0.777 & \textbf{0.784} & \textbf{0.789} & \textbf{0.808} \\
Yes & No & No   & 0.617 & 0.596 & 0.634 & 0.668 \\
Yes & No & Yes  & 0.692 & 0.701 & 0.691 & 0.713 \\
Yes & Yes & No  & 0.725 & 0.692 & 0.716 & 0.757 \\
Yes & Yes & Yes & \textbf{0.779} & 0.782 & 0.801 & 0.823
\end{tabular}
\caption{IF-S on Marine vessel traffic data: performance quantified using ROC AUC for combinations of stream transformations and sub-stream length $D$. Best across all transformations are in bold.}
\label{tab:marine-path-performance-updated-IF-S}
\end{table}

\begin{table}[h]
\centering
\scriptsize 
\setlength{\tabcolsep}{3pt} 
\begin{tabular}{ccc|cccc}
 \multicolumn{3}{c|}{\multirow{2}{*}{Transformation}} &
 \multicolumn{4}{c}{Conformance $\dist(\;\cdot \; ; \Sig^3(\mathcal C))$} \\
 \cline{4-7}
 & & & \multicolumn{4}{c}{Sub-stream length $D$} \\
 Lead-lag & Time-Diff & Inv.~Reset &
 $4$km & $8$km & $16$km & $32$km \\
 \hline
No & No & No    & 0.543 & 0.542 & 0.522 & 0.513 \\
No & No & Yes   & 0.555 & \textbf{0.585} & 0.564 & 0.535 \\
No & Yes & No   & 0.547 & 0.543 & 0.526 & 0.520 \\
No & Yes & Yes  & 0.559 & 0.589 & 0.572 & 0.545 \\
Yes & No & No   & 0.543 & 0.541 & 0.522 & 0.513 \\
Yes & No & Yes  & \textbf{0.566} & 0.556 & 0.542 & 0.533 \\
Yes & Yes & No  & 0.547 & 0.543 & 0.526 & 0.520 \\
Yes & Yes & Yes & 0.572 & 0.563 & \textbf{0.554} & \textbf{0.544} \\
\end{tabular}
\caption{LOF-M on Marine vessel traffic data: performance quantified using ROC AUC for combinations of stream transformations and sub-stream length $D$. Best across all transformations are in bold.}
\label{tab:marine-path-performance-updated-LOF-M}
\end{table}

\begin{table}[h]
\centering
\scriptsize 
\setlength{\tabcolsep}{3pt} 
\begin{tabular}{ccc|cccc}
 \multicolumn{3}{c|}{\multirow{2}{*}{Transformation}} &
 \multicolumn{4}{c}{Conformance $\dist(\;\cdot \; ; \Sig^3(\mathcal C))$} \\
 \cline{4-7}
 & & & \multicolumn{4}{c}{Sub-stream length $D$} \\
 Lead-lag & Time-Diff & Inv.~Reset &
 $4$km & $8$km & $16$km & $32$km \\
 \hline
No & No & No    & 0.484 & 0.500 & 0.491 & 0.492 \\
No & No & Yes   & 0.565 & 0.572 & 0.562 & 0.520 \\
No & Yes & No   & 0.511 & 0.505 & 0.493 & 0.494 \\
No & Yes & Yes  & 0.565 & 0.572 & 0.561 & 0.520 \\
Yes & No & No   & 0.484 & 0.500 & 0.491 & 0.493 \\
Yes & No & Yes  & 0.564 & 0.569 & 0.558 & 0.518 \\
Yes & Yes & No  & 0.530 & 0.533 & 0.553 & \textbf{0.600} \\
Yes & Yes & Yes & \textbf{0.574} & \textbf{0.588} & \textbf{0.589} & 0.584 \\
\end{tabular}
\caption{LOF-S on Marine vessel traffic data: performance quantified using ROC AUC for combinations of stream transformations and sub-stream length $D$. Best across all transformations are in bold.}
\label{tab:marine-path-performance-updated-LOF-S}
\end{table}

\subsubsection{Anomlous language detection}\label{subsec:language}
Next, we consider a sample of words. The corpus consists of English words, and the test set consists of words in one of the six languages: English, German, French, Italian, Polish, and Swedish. Words are coded into multivariate streams by taking one-hot encoding of the alphabets and cumulative sums. We obtained a corpus of 70,000 English words and a test set with 10,000 English words and 10,000 words from other five languages (2,000 each). We use a signature of $N=2$ on this test, due to the size of the dataset and the result is reported in \autoref{tab:language-performance}.

\begin{table}[h]
\centering
\scriptsize
\setlength{\tabcolsep}{3pt} 
\begin{tabular}{|l|c|c|c|c|c|}
\hline
 & SigMahaKNN & IF-M & IF-S & LOF-M & LOF-S \\ \hline
AUC & \textbf{0.878} & 0.713 & 0.723 & 0.769 & 0.787 \\ \hline
Standard Error & 0.002 & 0.004 & 0.004 & 0.003 & 0.003 \\ \hline
\end{tabular}
\caption{AUC and Standard Error values for different methods in the language dataset. Standard Error is based on bootstrapping of $10^4$ samples.}
\label{tab:language-performance}
\end{table}

\subsubsection{Univariate time series}\label{subsec:univariate}
For the specific case of detecting anomalous univariate time series, we benchmark our method against the ADSL shapelet method of Beggel et al.~\cite{beggel2019time}, using their set of 28 data sets from the UEA \& UCR time series repository \cite{bagnall2017repository} adapted in exactly the same manner.  Each data set comprises a set of time series of equal length, together with class labels.  One class (the same as in ADSL) is designated as a normal class, with all other classes designated as anomalies. To prepare the data for our method, we convert each time series into a 2-dimensional stream by incorporating a uniformly increasing time dimension.  We apply no other transformations to the data and take signatures of order $N=5$.

We create training and test sets exactly as in ADSL.  The training corpus $\mathcal{C}$ consists of 80\% of the normal time series, contaminated by a proportion of anomalies (we compute results for anomaly rates of 0.1\% and 5\%).  Across these data sets $|\mathcal{C}|$ ranges from 10 (Beef) to 840 (ChlorineConcentration at 5\%), $|\mathcal{I}_{\text{normal}}|$ ranges from 2 (Beef) to 200 (ChlorineConcentration), and $|\mathcal{I}_{\text{anomaly}}|$ ranges from 19 (BeetleFly and BirdChicken at 0.1\%) to 6401 (Wafer at 5\%).  We run experiments with ten random train-test splits, and take the median result.  The performance measure used by ADSL is the balanced accuracy, which requires a threshold to be set for detecting anomalies.  We report the best achievable balanced accuracy across all possible thresholds, and compare against the best value reported for ADSL.  Figure \ref{fig:time-series-benchmark} plots our results.

\begin{figure}[h]
    \centering
    \includegraphics[width=0.9\linewidth]{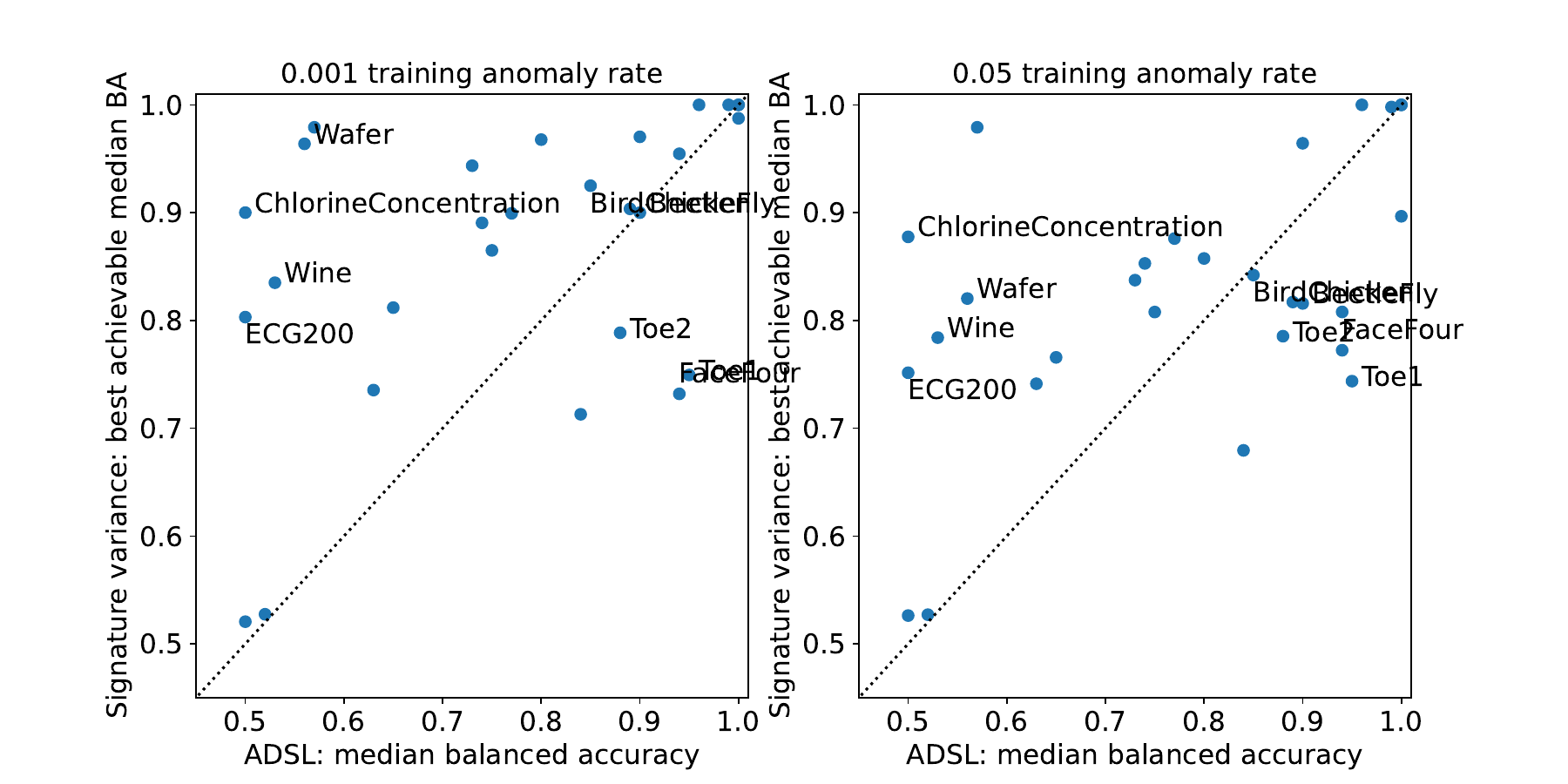}
    \caption{Comparison of our method against ADSL \cite{beggel2019time}}.
  \label{fig:time-series-benchmark}
\end{figure}

Our method performs competitively with ADSL, both when the proportion of anomalies in the training corpus is low and when it is high.  It is able to detect anomalies in four of the six data sets where ADSL struggles because the anomalies are less visually distinguishable (ChlorineConcentration, ECG200, Wafer, Wine).  However, there are data sets where ADSL performs better (BeetleFly, BirdChicken, FaceFour, ToeSegmentation1 and ToeSegmentation2): these data sets largely originate from research into shapelet methods, and they appear to contain features that are detected well by shapelets.  Applying transformations to the data sets before input may improve our method's results.

\section{Conclusion}

We proposed a generalisation of Mahalanobis distance, the variance norm, and showed that when combining with paths signatures and nearest-neighbours, SigMahaKNN has attractive theoretical properties such as reparametrisation invariance, concatenation invariance and graded discrimination power. We compared SigMahaKNN with isolation forest and shapelet-based methods for detecting anomalous streams with encouraging results.

\paragraph{Acknowledgement and funding disclosure}
The authors are grateful for the contribution of Imanol Perez Arribas and Jonathan H. 
Z. S. was supported by the EPSRC [EP/S026347/1]. 
R. C., and P. F. were supported by the Alan Turing Institute.
T. L. was funded in part by the EPSRC [EP/S026347/1], in part by The Alan Turing Institute under the EPSRC [EP/N510129/1], the Data Centric Engineering Programme (under the Lloyd’s Register Foundation grant G0095), the Defence and Security Programme (funded by the UK Government) and in part by the Hong Kong Innovation and Technology Commission (InnoHK Project CIMDA).

\bibliography{Reference.bib, ref.bib}        

\begin{thebibliography}{10}

\bibitem{bagnall2017repository}
A.~Bagnall, J.~Lines, W.~Vickers, and E.~Keogh.
\newblock The {UEA} \& {UCR} time series classification repository, 2020.
\newblock \url{www.timeseriesclassification.com}, accessed May 2020.

\bibitem{beggel2019time}
L.~Beggel, B.~X. Kausler, M.~Schiegg, M.~Pfeiffer, and B.~Bischl.
\newblock Time series anomaly detection based on shapelet learning.
\newblock {\em Computational Statistics}, 34:945--976, 2019.

\bibitem{10.1007/s10115-017-1067-8}
S.-E. Benkabou, K.~Benabdeslem, and B.~Canitia.
\newblock Unsupervised outlier detection for time series by entropy and dynamic time warping.
\newblock {\em Knowl. Inf. Syst.}, 54(2):463–486, feb 2018.

\bibitem{brereton2016re}
R.~G. Brereton and G.~R. Lloyd.
\newblock Re-evaluating the role of the mahalanobis distance measure.
\newblock {\em Journal of Chemometrics}, 30(4):134--143, 2016.

\bibitem{10.1145/342009.335388}
M.~M. Breunig, H.-P. Kriegel, R.~T. Ng, and J.~Sander.
\newblock Lof: Identifying density-based local outliers.
\newblock In {\em Proceedings of the 2000 ACM SIGMOD International Conference on Management of Data}, SIGMOD '00, page 93–104, New York, NY, USA, 2000. Association for Computing Machinery.

\bibitem{chevyrev2016primer}
I.~Chevyrev and A.~Kormilitzin.
\newblock A primer on the signature method in machine learning.
\newblock {\em arXiv preprint arXiv:1603.03788}, 2016.

\bibitem{dempster2023hydra}
A.~Dempster, D.~F. Schmidt, and G.~I. Webb.
\newblock Hydra: Competing convolutional kernels for fast and accurate time series classification.
\newblock {\em Data Mining and Knowledge Discovery}, pages 1--27, 2023.

\bibitem{Dua:2019}
D.~Dua and C.~Graff.
\newblock {UCI} machine learning repository, 2017.

\bibitem{10041999}
A.~Fermanian, T.~Lyons, J.~Morrill, and C.~Salvi.
\newblock New directions in the applications of rough path theory.
\newblock {\em IEEE BITS the Information Theory Magazine}, pages 1--18, 2023.

\bibitem{gjorgiev2020time}
L.~Gjorgiev and S.~Gievska.
\newblock Time series anomaly detection with variational autoencoder using mahalanobis distance.
\newblock In {\em ICT Innovations 2020. Machine Learning and Applications: 12th International Conference, ICT Innovations 2020, Skopje, North Macedonia, September 24--26, 2020, Proceedings 12}, pages 42--55. Springer, 2020.

\bibitem{hambly2010uniqueness}
B.~Hambly and T.~Lyons.
\newblock Uniqueness for the signature of a path of bounded variation and the reduced path group.
\newblock {\em Annals of Mathematics}, pages 109--167, 2010.

\bibitem{inproceedings}
G.~W. Howell and M.~Baboulin.
\newblock Iterative solution of sparse linear least squares using lu factorization.
\newblock In {\em Proceedings of the International Conference on High Performance Computing in Asia-Pacific Region}, HPC Asia 2018, pages 47--53, New York, NY, USA, 2018. Association for Computing Machinery.

\bibitem{hyndman2015large}
R.~J. Hyndman, E.~Wang, and N.~Laptev.
\newblock Large-scale unusual time series detection.
\newblock In {\em 2015 IEEE international conference on data mining workshop (ICDMW)}, pages 1616--1619. IEEE, 2015.

\bibitem{jin2017md}
X.~Jin, Y.~Wang, T.~W. Chow, and Y.~Sun.
\newblock Md-based approaches for system health monitoring: a review.
\newblock {\em IET Science, Measurement \& Technology}, 11(4):371--379, 2017.

\bibitem{kamoi2020mahalanobis}
R.~Kamoi and K.~Kobayashi.
\newblock Why is the mahalanobis distance effective for anomaly detection?
\newblock {\em arXiv preprint arXiv:2003.00402}, 2020.

\bibitem{10.1007/978-3-030-30490-4_56}
D.~Li, D.~Chen, B.~Jin, L.~Shi, J.~Goh, and S.-K. Ng.
\newblock Mad-gan: Multivariate anomaly detection for time series data with generative adversarial networks.
\newblock In I.~V. Tetko, V.~K{\r{u}}rkov{\'a}, P.~Karpov, and F.~Theis, editors, {\em Artificial Neural Networks and Machine Learning -- ICANN 2019: Text and Time Series}, pages 703--716, Cham, 2019. Springer International Publishing.

\bibitem{li2021dynamic}
G.~Li and J.~J. Jung.
\newblock Dynamic relationship identification for abnormality detection on financial time series.
\newblock {\em Pattern Recognition Letters}, 145:194--199, 2021.

\bibitem{5509781}
R.~Lin, E.~Khalastchi, and G.~A. Kaminka.
\newblock Detecting anomalies in unmanned vehicles using the mahalanobis distance.
\newblock In {\em 2010 IEEE International Conference on Robotics and Automation}, pages 3038--3044, 2010.

\bibitem{4781136}
F.~T. Liu, K.~M. Ting, and Z.-H. Zhou.
\newblock Isolation forest.
\newblock In {\em 2008 Eighth IEEE International Conference on Data Mining}, pages 413--422, 2008.

\bibitem{app12178661}
J.-H. Liu, N.~T. Corbita, R.-M. Lee, and C.-C. Wang.
\newblock Wind turbine anomaly detection using mahalanobis distance and scada alarm data.
\newblock {\em Applied Sciences}, 12(17), 2022.

\bibitem{MR3727607}
T.~Lyons.
\newblock Rough paths, signatures and the modelling of functions on streams.
\newblock In {\em Proceedings of the {I}nternational {C}ongress of {M}athematicians---{S}eoul 2014. {V}ol. {IV}}, pages 163--184. Kyung Moon Sa, Seoul, 2014.

\bibitem{lyons2007differential}
T.~J. Lyons, M.~Caruana, and T.~L{\'e}vy.
\newblock {\em Differential equations driven by rough paths}.
\newblock Springer, 2007.

\bibitem{mahalanobis2018generalized}
P.~C. Mahalanobis.
\newblock On the generalized distance in statistics.
\newblock {\em Sankhy{\=a}: The Indian Journal of Statistics, Series A (2008-)}, 80:S1--S7, 1936.

\bibitem{middlehurst2021hive}
M.~Middlehurst, J.~Large, M.~Flynn, J.~Lines, A.~Bostrom, and A.~Bagnall.
\newblock Hive-cote 2.0: a new meta ensemble for time series classification.
\newblock {\em Machine Learning}, 110(11-12):3211--3243, 2021.

\bibitem{middlehurst2023bake}
M.~Middlehurst, P.~Sch{\"a}fer, and A.~Bagnall.
\newblock Bake off redux: a review and experimental evaluation of recent time series classification algorithms.
\newblock {\em arXiv preprint arXiv:2304.13029}, 2023.

\bibitem{morrill2020generalised}
J.~Morrill, A.~Fermanian, P.~Kidger, and T.~Lyons.
\newblock A generalised signature method for multivariate time series feature extraction.
\newblock {\em arXiv preprint arXiv:2006.00873}, 2020.

\bibitem{PANG2023}
J.~Pang, D.~Liu, Y.~Peng, and X.~Peng.
\newblock Temporal dependence mahalanobis distance for anomaly detection in multivariate spacecraft telemetry series.
\newblock {\em ISA Transactions}, 2023.

\bibitem{PATIL20151054}
N.~Patil, D.~Das, and M.~Pecht.
\newblock Anomaly detection for igbts using mahalanobis distance.
\newblock {\em Microelectronics Reliability}, 55(7):1054--1059, 2015.

\bibitem{pham2017anomaly}
T.~Pham and S.~Lee.
\newblock Anomaly detection in bitcoin network using unsupervised learning methods, 2017.

\bibitem{9455004}
G.~Vareldzhan, K.~Yurkov, and K.~Ushenin.
\newblock Anomaly detection in image datasets using convolutional neural networks, center loss, and mahalanobis distance.
\newblock In {\em 2021 Ural Symposium on Biomedical Engineering, Radioelectronics and Information Technology (USBEREIT)}, pages 0387--0390, 2021.

\bibitem{ye2009time}
L.~Ye and E.~Keogh.
\newblock Time series shapelets: a new primitive for data mining.
\newblock In {\em Proceedings of the 15th ACM SIGKDD international conference on Knowledge discovery and data mining}, pages 947--956, 2009.

\bibitem{NEURIPS2022_194b8dac}
X.~Zhang, Z.~Zhao, T.~Tsiligkaridis, and M.~Zitnik.
\newblock Self-supervised contrastive pre-training for time series via time-frequency consistency.
\newblock In S.~Koyejo, S.~Mohamed, A.~Agarwal, D.~Belgrave, K.~Cho, and A.~Oh, editors, {\em Advances in Neural Information Processing Systems}, volume~35, pages 3988--4003. Curran Associates, Inc., 2022.

\end{thebibliography}
\bibliographystyle{abbrv}  

\appendix

\section{Plots of conformance distances for PenDigits data set}
Here we provide the full distribution of conformance distance for the PenDigits dataset.
\begin{figure}[h!]
  \begin{subfigure}{.2\textwidth}
    \centering
    \includegraphics[width=1.0\linewidth]{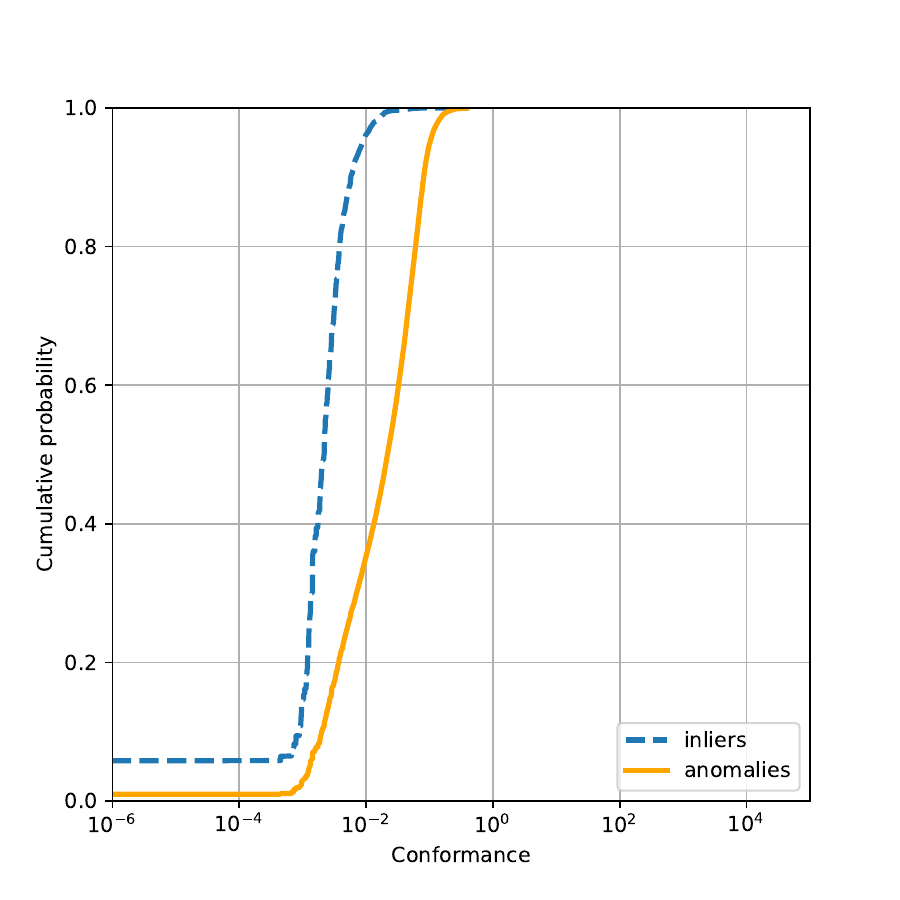}
    \caption{$N=1$}
  \end{subfigure}
  \begin{subfigure}{.2\textwidth}
    \centering
    \includegraphics[width=1.0\linewidth]{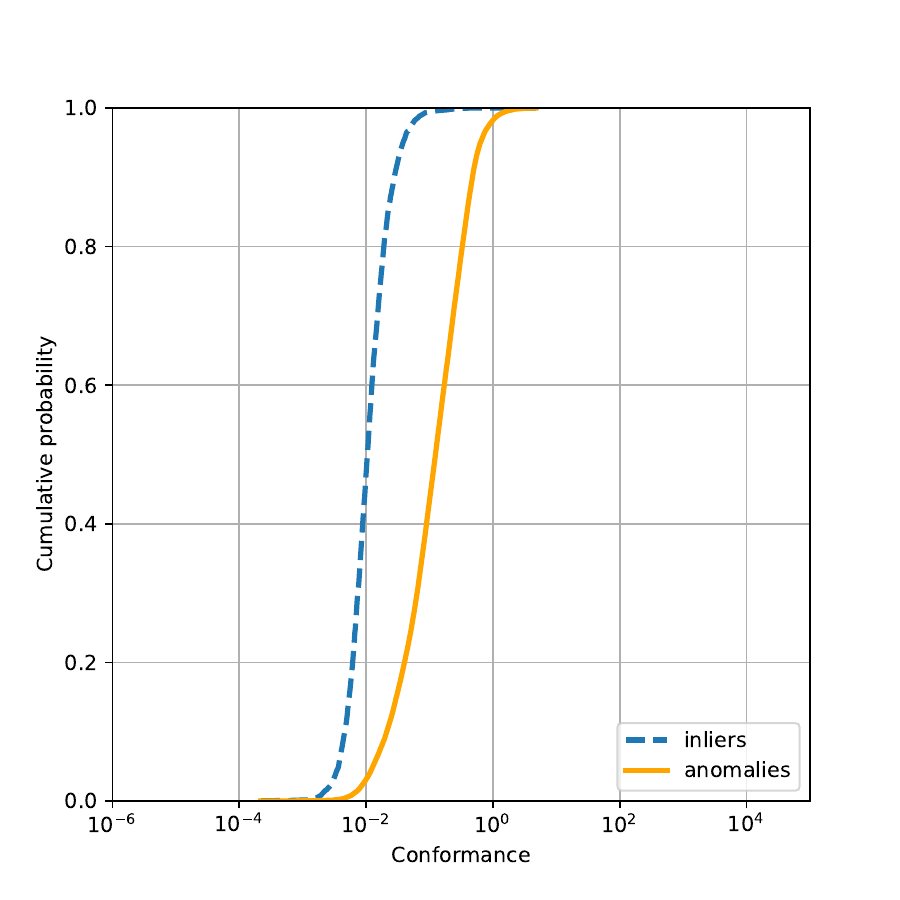}
    \caption{$N=2$}
  \end{subfigure}
  \newline
  \begin{subfigure}{.2\textwidth}
    \centering
    \includegraphics[width=1.0\linewidth]{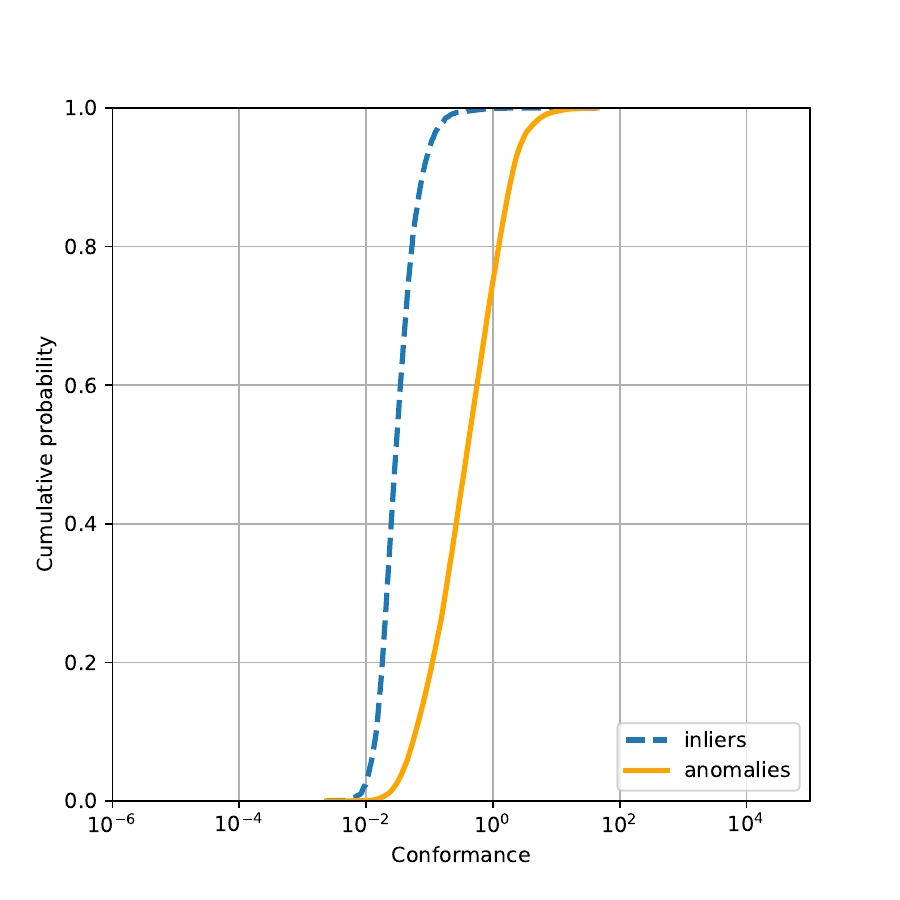}
    \caption{$N=3$}
  \end{subfigure}
  \begin{subfigure}{.2\textwidth}
    \centering
    \includegraphics[width=1.0\linewidth]{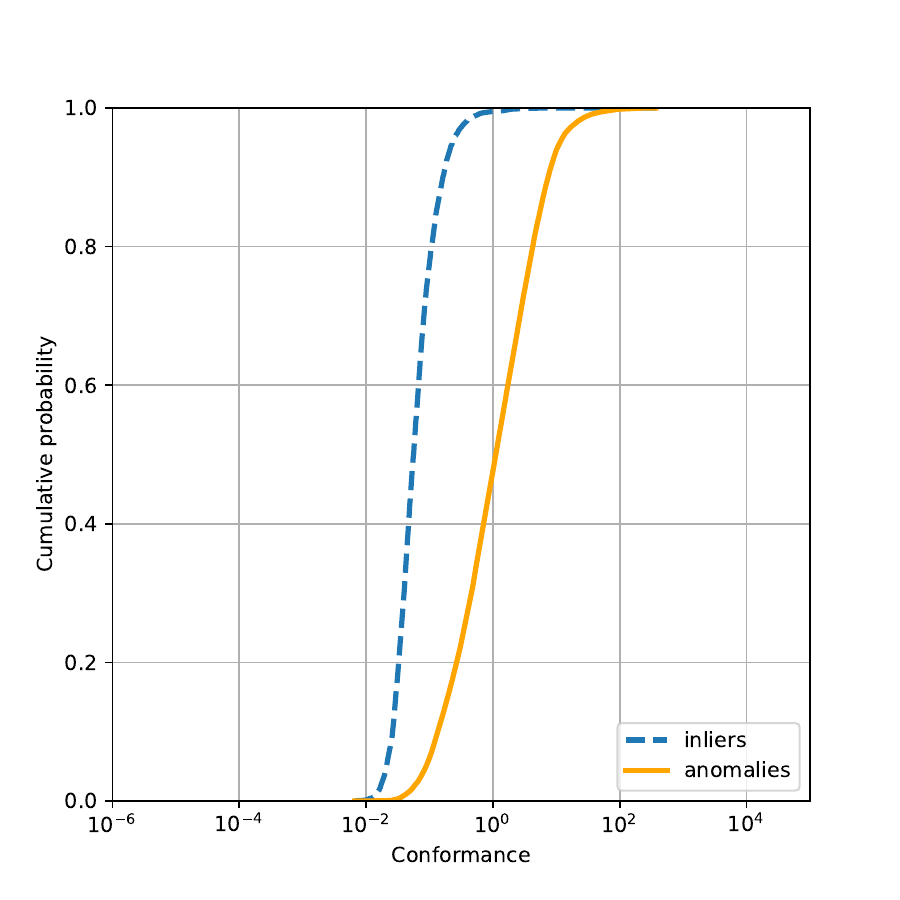}
    \caption{$N=4$}
  \end{subfigure}
  \newline
  \begin{subfigure}{0.2\textwidth}
    \centering
    \includegraphics[width=1.0\linewidth]{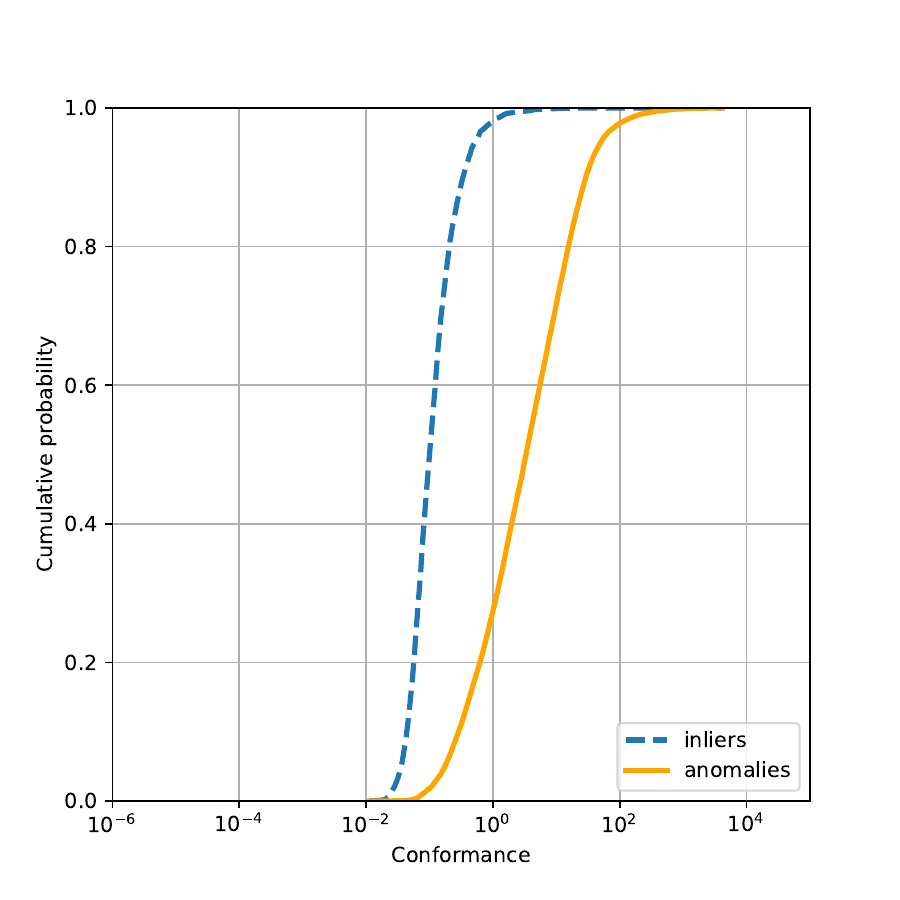}
    \caption{$N=5$}
  \end{subfigure}
  \caption{Empirical cumulative distributions of the conformance distance $\dist(\;\cdot \; ; \Sig^N(\mathcal C))$, obtained for normal and anomalous testing data and based on computing signatures of order $N$.}
  \label{fig:pendigits-conformances}
\end{figure}

\section{Table of Results for Univariate Time Series Data}
Here we provide complete AUC results for our univariate time series experiment.
\subsection{Table of Results for 0.1\% Anomaly Rate}
\begin{table}[h!]
  \scriptsize
  \centering
  \begin{tabular}{lll}
    \toprule
    Dataset & SigMahaKNN & ADSL \\
    \midrule
Adiac & \textbf{1.00 (0.00)} & 0.99 (0.10) \\
ArrowHead & \textbf{0.812 (0.071)} & 0.65 (0.03) \\
Beef & \textbf{0.979 (0.184)} & 0.57 (0.15) \\
BeetleFly & 0.90 (0.063) & \textbf{0.90 (0.08)} \\
BirdChicken & \textbf{0.925 (0.106)} & 0.85 (0.15) \\
CBF & \textbf{0.968 (0.016)} & 0.8 (0.04) \\
ChlorineConcentration & \textbf{0.900 (0.008)} & 0.5 (0.0) \\
Coffee & 0.713 (0.109) & \textbf{0.84 (0.04)} \\
ECG200 & \textbf{0.803 (0.068)} & 0.5 (0.03) \\
ECGFiveDays & \textbf{0.955 (0.015)} & 0.94 (0.11) \\
FaceFour & 0.732 (0.062) & \textbf{0.94 (0.10)} \\
GunPoint & \textbf{0.865 (0.061)} & 0.75 (0.03) \\
Ham & \textbf{0.521 (0.035)} & 0.5 (0.02) \\
Herring & \textbf{0.527 (0.062)} & 0.52 (0.02) \\
Lightning2 & 0.735 (0.076) & \textbf{0.63 (0.07)} \\
Lightning7 & \textbf{0.944 (0.091)} & 0.73 (0.11) \\
Meat & 0.988 (0.049) & \textbf{1.00 (0.04)} \\
MedicalImages & \textbf{0.970 (0.039)} & 0.9 (0.03) \\
MoteStrain & \textbf{0.891 (0.012)} & 0.74 (0.01) \\
Plane & \textbf{1.00 (0.00)} & \textbf{1.00 (0.04)} \\
Strawberry & \textbf{0.899 (0.008)} & 0.77 (0.03) \\
Symbols & \textbf{1.00 (0.007)} & 0.96 (0.02) \\
ToeSegmentation1 & 0.749 (0.039) & \textbf{0.95 (0.01)} \\
ToeSegmentation2 & 0.789 (0.052) & \textbf{0.88 (0.02)} \\
Trace & \textbf{1.00 (0.052)} & \textbf{1.00 (0.04)} \\
TwoLeadECG & \textbf{0.904 (0.015)} & 0.89 (0.01) \\
Wafer & \textbf{0.964 (0.012)} & 0.56 (0.02) \\
Wine & \textbf{0.835 (0.094)} & 0.53 (0.02) \\
    \bottomrule
  \end{tabular}
  \caption{Comparison of SigMahaKNN and ADSL Performances at 0.1\% Anomaly Rate}
  \label{table:0.1_percent_anomaly_rate}
\end{table}

\subsection{Table of Results for 5\% Anomaly Rate}
\begin{table}[h!]
  \scriptsize
  \centering
  \begin{tabular}{lll}
    \toprule
    Dataset & SigMahaKNN & ADSL \\
    \midrule
Adiac & \textbf{0.998 (0.133)} & 0.99 (0.10) \\
ArrowHead & \textbf{0.766 (0.081)} & 0.65 (0.03) \\
Beef & \textbf{0.979 (0.184)} & 0.57 (0.15) \\
BeetleFly & \textbf{0.816 (0.202)} & 0.9 (0.08) \\
BirdChicken & \textbf{0.842 (0.106)} & 0.85 (0.15) \\
CBF & \textbf{0.858 (0.034)} & 0.8 (0.04) \\
ChlorineConcentration & \textbf{0.878 (0.013)} & 0.5 (0.0) \\
Coffee & 0.679 (0.130) & \textbf{0.84 (0.04)} \\
ECG200 & \textbf{0.752 (0.074)} & 0.5 (0.03) \\
ECGFiveDays & 0.808 (0.018) & \textbf{0.94 (0.11)} \\
FaceFour & 0.772 (0.087) & \textbf{0.94 (0.10)} \\
GunPoint & \textbf{0.808 (0.070)} & 0.75 (0.03) \\
Ham & \textbf{0.526 (0.032)} & 0.5 (0.02) \\
Herring & \textbf{0.527 (0.065)} & 0.52 (0.02) \\
Lightning2 & \textbf{0.741 (0.065)} & 0.63 (0.07) \\
Lightning7 & \textbf{0.837 (0.091)} & 0.73 (0.11) \\
Meat & 0.897 (0.073) & \textbf{1.00 (0.04)} \\
MedicalImages & \textbf{0.964 (0.027)} & 0.9 (0.03) \\
MoteStrain & \textbf{0.853 (0.020)} & 0.74 (0.01) \\
Plane & \textbf{1.00 (0.036)} & \textbf{1.00 (0.04)} \\
Strawberry & \textbf{0.876 (0.026)} & 0.77 (0.03) \\
Symbols & \textbf{1.00 (0.021)} & 0.96 (0.02) \\
ToeSegmentation1 & 0.744 (0.037) & \textbf{0.95 (0.01)} \\
ToeSegmentation2 & 0.785 (0.055) & \textbf{0.88 (0.02)} \\
Trace & \textbf{1.00 (0.045)} & \textbf{1.00 (0.04)} \\
TwoLeadECG & 0.817 (0.021) & \textbf{0.89 (0.01)} \\
Wafer & \textbf{0.820 (0.029)} & 0.56 (0.02) \\
Wine & \textbf{0.784 (0.101)} & 0.53 (0.02) \\
    \bottomrule
  \end{tabular}
  \caption{Comparison of SigMahaKNN and ADSL Performances at 5\% Anomaly Rate}
  \label{table:5_percent_anomaly_rate}
\end{table}

\end{document}